\xpatchcmd{\author}{\relax#1\relax}{\relax\detokenize{#1}\relax}{}{}
\newtheorem{theorem}{Theorem}
\newtheorem{lemma}{Lemma}
\title{Joint Semantic Domain Alignment and\\ Target
Classifier Learning for\\ Unsupervised Domain Adaptation}
\author[\empty]{Dong-Dong Chen\textsuperscript{\mdseries{1,2,}}\thanks{Work done while author was an intern at JD AI Research.} }
\author[2]{Yisen Wang}
\author[2]{Jinfeng Yi}
\author[3]{Zaiyi Chen}
\author[1]{Zhi-Hua Zhou}
\affil[1]{National Key Laboratory for Novel Software Technology, Nanjing University}
\affil[2]{JD AI Research}
\affil[3]{School of Computer Science, University of Science and Technology of China}
\affil[ ]{\small{\{chendd, zhouzh\}@lamda.nju.edu.cn, eewangyisen@gmail.com\\
yijinfeng@jd.com, chenzaiyi@mail.ustc.edu.cn}}
\begin{document}

\maketitle

\begin{abstract}
  Unsupervised domain adaptation aims to transfer the classifier learned from 
  the source domain to the target domain in an unsupervised manner.
   With the help of target pseudo-labels, aligning class-level distributions 
   and learning the classifier in the target domain are two widely used objectives. 
   Existing methods often separately optimize these two individual objectives, 
   which makes them suffer from the neglect of the other. However, optimizing 
   these two aspects together is not trivial. To alleviate the above issues, 
   we propose a novel method that jointly optimizes semantic domain alignment 
   and target classifier learning in a holistic way.  
  The joint optimization mechanism can not only 
  eliminate their weaknesses but 
  also complement their strengths.
  The theoretical analysis also verifies the favor of the joint optimization mechanism. 
  Extensive experiments on benchmark datasets show that the proposed method 
  yields the best performance in comparison with the state-of-the-art unsupervised 
  domain adaptation methods.
\end{abstract}

\section{Introduction}
Deep Neural Networks (DNNs) have achieved a great success on many tasks
 such as image classification when a large set of labeled examples 
 are available~\cite{Krizhevsky2012,Simonyan2014,
    Szegedy2015,He2016}. However, in many real-world applications, 
    there are plentiful unlabeled data but very limited labeled data;
     and the acquisition of labels is costly, or even infeasible. 
     Unsupervised domain adaptation is a popular way to address this issue.
      It aims at transferring a well-performing model learned from a source 
      domain to a different but related target domain when the labeled data 
      from the target domain is not available~\cite{Csurka2017}.

    Most efforts on unsupervised domain adaptation devote to reducing the 
    domain discrepancy, such that a well-trained classifier in the source
     domain can be applied to the target domain~\cite{Tzeng2014,Long2015,
     Sun2016,Ganin2015,Tzeng2017}. 
    However, these methods only align the distributions in the domain-level, 
    and fail to consider the class-level relations among the source and target samples. 
    For example, a car in the target domain may be mistakenly aligned to a 
    bike in the source domain. 
    To alleviate the class-level misalignment, 
    semantic domain alignment methods~\cite{Xie2018,Pan2019,Chen2018a,Zhang2018a,Deng2018} 
    that enforce the samples from 
    the same class to be close across domains are proposed.
    However, these domain alignment methods neglect the 
    structures in target domain itself.
    Target classifier learning methods~\cite{Saito2017,Zhang2018} 
    learn target discriminative features by distinguishing the samples 
    in the target domain directly. 
    Nonetheless, they may miss some important supervised information in the source domain.
    Intuitively, a straightforward method is to optimize semantic domain alignment and
     target classifier learning jointly. 
    The joint optimization mechanism can not only eliminate their weaknesses,
     but also complement their strengths. 
    The semantic domain alignment methods enforce the intra-class compactness,
     distinguishing different samples from the target domain. 
    The target classifier learning methods enforce the inter-class discrepancy 
    in the target domain, which in turn help to align the same class samples between 
    two domains.
    However, as shown in Figure~\ref{Fig_1_SDA_TCL} (a) and (b),
    semantic domain alignment works in the \textit{feature} space while target 
    classifier learning works in the \textit{label} space. Thus, optimizing 
    them together is not a trivial task. 
  
    \begin{figure}[t]
      \centering
      \includegraphics[width=0.9\textwidth]{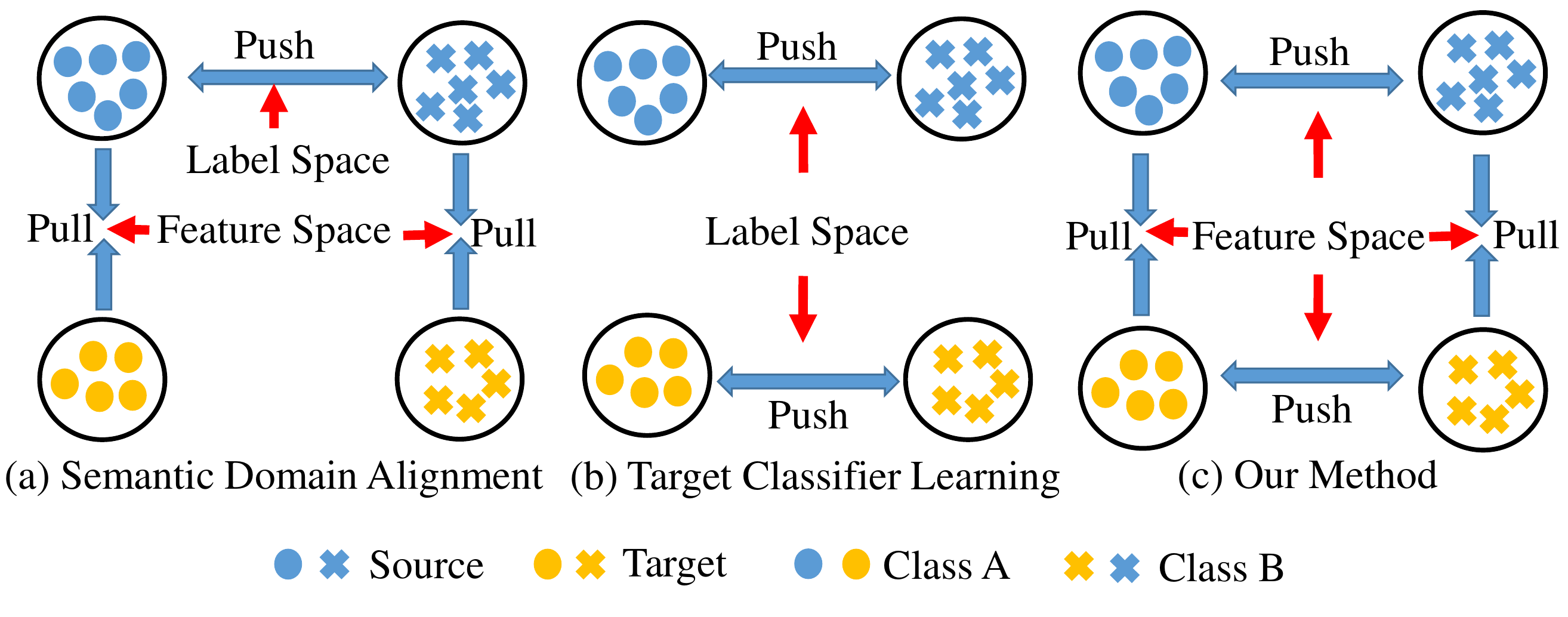}
      \vspace{-0.1 in}
      \caption{Comparisons of semantic domain alignment methods, target 
      classifier learning methods and 
      our proposed method. Note that we jointly optimize 
      semantic domain alignment and target classifier learning in the \textit{feature} 
      space.
      }
      \label{Fig_1_SDA_TCL}
      \vspace{-0.1 in}
  \end{figure} 
    
    In this paper, we propose a novel unsupervised domain adaption method that 
    jointly optimizes semantic domain alignment and target classifier learning 
    in a holistic way. 
    The proposed method is called SDA-TCL, which is short for \textbf{S}emantic 
    \textbf{D}omain \textbf{A}lignment and \textbf{T}arget \textbf{C}lassifier 
    \textbf{L}earning. Figure~\ref{Fig_1_SDA_TCL} (c) illustrats its basic idea. We utilize class centers in the feature space 
    as the bridge to jointly optimize semantic domain-invariant 
    features and target discriminative features both in the feature space.
    For target classifier learning,
    we design the discriminative center loss to learn 
    discriminative features directly by pulling the samples toward
    their corresponding centers according to their pseudo-labels and 
    pushing them away from the other centers.
    For semantic domain alignment, we share the class centers between 
    the same classes across domains to pull the samples from the same class together.
    The main contributions of this paper are as follows:
    \vspace{-0.1cm}
    \begin{compactitem}
        \item To the best of our knowledge, this is the first work trying
         to understand the relationship between semantic domain alignment
          and target classifier learning.
        \item We propose a novel method called \textbf{S}emantic 
    \textbf{D}omain \textbf{A}lignment and \textbf{T}arget \textbf{C}lassifier 
    \textbf{L}earning (SDA-TCL), which can jointly optimize semantic domain 
    alignment and target classifier learning in a holistic way.
        \item We show both theoretically and empirically that the 
        proposed joint optimization mechanism is highly effective.
    \end{compactitem}

\section{Related Work}\label{related_work}

    In this paper, we focus on the problem of deep unsupervised domain 
    adaptation for image classification, and many works along this line of 
    research have been proposed ~\cite{Long2017,Ganin2016,Tzeng2017,Saito2017,Pinheiro2018}. 

    These works can be roughly divided into the following two categories: 
    The first one is to align distributions between the source and the target domain. 
    Its main idea is to reduce the discrepancy between two domains such that a 
    classifier learned from the source domain may be directly applied to the target domain.
    Under this motivation, 
    multiple methods have been used to align the distributions of two domains, 
    such as maximum mean discrepancy (MMD) \cite{Tzeng2014,Long2015,Long2017}, 
    CORrelation ALignment (CORAL) \cite{Sun2016,Chen2018}, 
    attention~\cite{Kang2018}, and optimal transport \cite{Damodaran2018}.
     Besides, adversarial learning is also used to learn domain-invariant 
     features \cite{Ganin2015, Tzeng2017,Long2018,Pinheiro2018}. 
    On par with these methods aligning distributions 
    in the feature space, some methods align distributions
    in raw pixel space by translating source data to the 
    target domain with Image to Image translation 
    techniques~\cite{Liu2016,Bousmalis2017,Zhu2017,
    Liu2017,Hoffman2018,Sankaranarayanan2018}.
    In addition to domain-level distribution alignment, the class-level information 
    in target data is also frequently used to align class-level 
    distributions~\cite{Xie2018,Pan2019,Zhang2018a,Chen2018a,Deng2018,Haeusser2017}. 
    Compared with these methods, 
    our method not only aligns class-level distributions, but also learns target 
    discriminative features. 
    
    The second one is to capture target-specific structures by constructing a reconstruction 
    network \cite{Ghifary2016, Bousmalis2016},
    adjusting the distances between 
    target samples and decision boundaries \cite{Saito2017a,Saito2018,Kumar2018a},
    seeking for density-based
    separations or clusters~\cite{Long2016,French2018,Shu2018a,Sener2016,Laradji2018,
    Liang2018} and learning target classifiers directly~\cite{Saito2017,Zhang2018}.
    Compared with these methods, our method not only learns target classifiers but also 
    aligns class-level distributions, thus is more desirable.

\section{Methodology}\label{our_method}
    In unsupervised domain adaptation, we have a labeled 
    source data set $\mathcal{D}^s = \{(x_i^s,y_i^s)|i=1, 2, \ldots, N^s\}$ 
    and a unlabeled target data set
    $\mathcal{D}^t = \{(x_i^t)|i=1, 2, \ldots, N^t\}$. 
    Suppose the source data have $C$ classes, which is shared with the target data. 
    Our goal is to learn a model from the data set $\mathcal{D}^s \cup \mathcal{D}^t$ 
    to classify the 
    samples in $\mathcal{D}^t$.
    Assume that each class in source (target) data has its corresponding 
    source (target) class 
    center $c_j^s$ ($c_j^t$) ($j \in \mathcal{C} = 
    \{1, 2, \ldots, C\}$) to represent it in the feature space. In our method,
    the target sample $x_i^t$ is classified according to its closest target center in the 
    feature space. 
    A generator network $G$ (parametrized by $\theta_G$) is utilized to generate the 
    features, denoted by $G(x_i^s)$ for the source sample
    $x_i^s$ and $G(x_i^t)$ for the target sample $x_i^t$. 
    
    We aim to jointly optimize semantic domain-invariant features and target 
    discriminative features in the feature space.  
     As illustrated in Figure~\ref{Fig_2_framework_SDA_TCL}, our loss function 
     consists of three parts: 1) $L_{s}(\theta_G)$: 
     It learns discriminative features for source domain by pulling the source sample 
     toward its corresponding source center according to its label
      and pushing it away from the 
     other source centers.
     2) $L_{t}(\theta_G)$: It learns discriminative features for target domain by 
     pulling the target sample toward its corresponding target center 
     according to its pseudo-label
     and pushing it away from the other target centers. 
    3) $L_{c}(\theta_G)$ : 
    It aligns class-level distributions by pulling the source center and the target center
    from the same class.
    We jointly optimize them:
    \begin{equation}\label{loss_for_G}
        L_{G}(\theta_G) =  L_{s}(\theta_G)  
        + \lambda_tL_{t}(\theta_G)+ \lambda_cL_{c}(\theta_G) + \lambda_dL_{d}(\theta_G),
    \end{equation}
    where $\lambda_d$, $\lambda_t$ and $\lambda_c$ are
    the balance parameters and $L_{d}(\theta_G)$ is used to align 
    domain-level distributions for providing a initial classifier to 
    label the pseudo-labels following the previous methods~\cite{Xie2018, Zhang2018a}.
   
     \begin{figure}[t]
        \centering
        \includegraphics[width=0.8\textwidth]{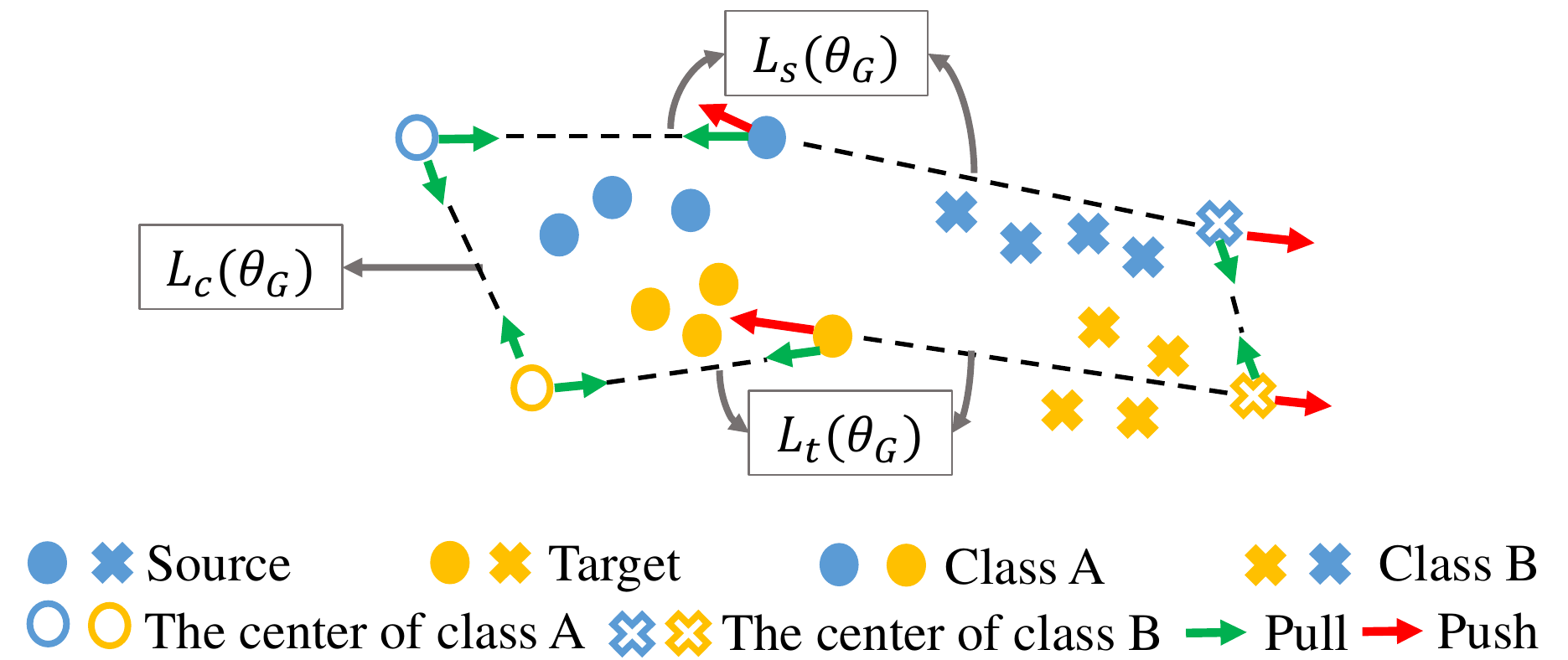}
        \vspace{-0.05 in}
        \caption{Illustration our proposed method SDA-TCL. 
        We jointly optimize semantic domain alignment and 
        target classifier learning in the feature space 
        by optimizing $L_{s}(\theta_G)$, $L_{t}(\theta_G)$ and $L_{c}(\theta_G)$.
        }
        \label{Fig_2_framework_SDA_TCL}
        \vspace{-0.15 in}
    \end{figure}

    \subsection{Learning Source Discriminative Features} \label{source_section}
    We aim to pull the source sample 
    toward its corresponding source center and push it away from the other source centers.
    Here, we design discriminative center loss, which requires that the distances 
    between samples and centers from the same class are smaller than a margin $\alpha$
    and the distances 
    between samples and centers from different classes are larger 
    than a margin $\beta$.
    The discriminative center loss be formulated as:
    \begin{align}\label{center_classification_loss}
        L_{s}(\theta_G) = 
        \sum_{i=1}^{N^s}\Big(
            [d(G(x_i^s),c_{y_i^s}^s) -\alpha]_+
        + [\beta - d(G(x_i^s), c_{\widetilde{y}_i^s}^s)]_+
        \Big),
    \end{align}
    where $d(G(x_i^s), c_j^s)$ 
    denotes the squared Euclidean distance between sample $x_i^s$ and center $c_j^s$, and
    \begin{equation}
        \widetilde{y}_i^s = \mathop{\arg\min} \limits_{j \in \mathcal{C},
         j\neq y_i^s }d(G(x_i^s), c_j^s) 
    \end{equation}
    denotes the closest negative center for sample $x_i^s$ in source centers, and 
    $[a]_+$ denotes the rectifier function which is equal to $\max(0,a)$.
    
    Note that we do not utilize softmax loss
    for classification but 
    design the discriminative center loss.
    The discriminative center loss has two advantages compared with softmax loss:
    1) The discriminative center loss enforces the intra-class compactness, which is helpful to pull ambiguous features away from the class boundaries~\cite{Saito2017a,Saito2018};
    and 2) The discriminative center loss 
    distinguishes the samples in the feature space directly,
     which makes it work in the same space with the class-level alignment.
    
    \subsection{Learning Target Discriminative Features}\label{tcl_sec}
    For the target domain, we aim to learn discriminative features directly in the 
    feature space like source domain.
    Here, we optimize $L_t(\theta_G)$ by utilizing the designed discriminative center loss 
    to pull the 
    target sample toward
    its corresponding target center according to its pseudo-label and
    push it away from the other target centers, which can be formulated as:
    \begin{align}\label{weighted_center_classification_loss}
    \small
        L_t(\theta_G) =\sum_{i = 1}^{N^t}w_i\Big(
            [{d(G(x_i^t), c_{\hat{y}_i^t}^t)} - \alpha]_+ +
        [\beta - d(G(x_i^t), c_{\widetilde{y}_i^t}^t)]_+  
        \Big),
    \end{align}
    where $\hat{y}_i^t$ denotes the pseudo-label for sample $x_i^t$,
     $\widetilde{y}_i^t$ denotes the closest negative target 
    center for sample $x_i^t$ and $w_i$ is the sample weight
    \begin{equation}
        w_i =  \frac{d(G(x_i^t),c_{\widetilde{y}_i^t})}{d(G(x_i^t),c_{\hat{y}_i^t})}-1.
    \end{equation} 
    Then we scale $w_i$ to [0, 1] within the same class.
    A target sample closer to its center than other centers will get 
    a big $w_i$, which
    means the center is more confident on this sample.
    Target pseudo-labels are widely used in the unsupervised domain adaptation 
    methods~\cite{Sener2016,Saito2017,Zhang2018,Xie2018}, while the time to 
    involve pseudo-labels has never been analyzed by these previous methods.
    Involving pseudo-labels from scratch may bring some mistakes by the random 
    pseudo-labels and involving pseudo-labels 
    by a well-learned classifier in the source domain 
    may bring some confident mistakes, which are hard to be corrected.
    We utilize pseudo-labels after a relative small iteration parameter $I_s$ 
    and increase the importance of pseudo-labels by a 
    ramp-up curve (details in Section~\ref{Implementation_Detail_sec}).

    \subsection{Learning Semantic Domain-Invariant 
    Features}\label{semantic_domain_alignment_sec}
    To align class-level distributions, the distances in the feature space between 
    the target samples and the source samples from the same class should be small.
    Constraining the distances between samples directly may bring some noise 
    because of the inaccurate pseudo-labels~\cite{Xie2018}, 
    we alter to
    optimize the distances between the source center and target center from the same class.
    A straightforward method for optimizing $L_{c}(\theta_G)$ can be formulated as:
    \begin{align}\label{class_level_loss}
        L_{c}(\theta_G) = \sum_{j=1}^{C}\left\|c_j^s - c_j^t\right\|_2, 
    \end{align}
    Considering the parameter $\lambda_c$ in Eq.~\ref{loss_for_G} needs to be tuned, we 
    here utilize another method, which makes the class centers are shared between 
    the source domain and target domain, to optimize $L_{c}(\theta_G)$.
    This means that we set 
    \begin{equation}
        c_j^s = c_j^t
    \end{equation}
    for $j \in \mathcal{C} = 
    \{1, 2, \ldots, C\}$ and we do not need to 
    calculate $L_{c}(\theta_G)$ in Eq.~\ref{loss_for_G}. We utilize 
    $\mathcal{C}_s = \{c_j^s\}$ to denote the shared class center set.

    To align domain-level distributions, we adopt the 
    Reverse Gradient (RevGrad) algorithm~\cite{Ganin2015} to construct 
    a discriminator network $D$.
    The discriminator $D$ classifies whether the feature 
    comes from the source or the target domain, and 
    the generator $G$ devotes to fooling $D$, enforcing the generator $G$ to generate 
    domain-invariant features.
    The discriminator $D$ is optimized by the standard classification loss:
    \begin{align}\label{disc_loss}
        L_{d}(\theta_D) = -\sum_{i=0}^{N^s}\log(D(G(x_i^s))) 
        - \sum_{i=0}^{N^t}\log(1- D(G(x_i^t))),
    \end{align}
    while the generator $G$ is optimized to minimize 
    the domain-invariant loss:
    \begin{equation}
        L_{d}(\theta_G) = -L_{d}(\theta_D).    
    \end{equation}

    \subsection{The Complete SDA-TCL Algorithm}\label{algorithm_sec}
    We present the complete procedure of SDA-TCL in Algorithm~\ref{csuda_algorithm}.
    We optimize the generator $G$ and class centers $\{c_j^s\}$ by Eq.~\ref{loss_for_G} and 
    the discriminator $D$ by Eq.~\ref{disc_loss} on each mini-batch.
    As we can see, our objective loss can be computed in linear time.
    We update the pseudo-labels and weights for every $k$ iterations for 
    computational efficiency and we fix $k = 15$ for all experiments.

    \begin{algorithm}[t]
        \begin{small}
            \caption{SDA-TCL}
            \label{csuda_algorithm}
            \textbf{Input:}
            Labeled source set $\mathcal{D}^s$,  
            unlabeled target set  $\mathcal{D}^t$, 
             total iteration $M$, and the frequency to update target pseudo-labels $k$ \\
             \textbf{Output:} The prediction of target data ${\hat{y}_i^t}$
            \begin{algorithmic} [1]
                \floatname{algorithm}{Procedure}
                \renewcommand{\algorithmicrequire}{\textbf{Input:}}
                \renewcommand{\algorithmicensure}{\textbf{Output:}}
                \renewcommand{\algorithmiccomment}{\textbf{asfks}}
                \STATE  \textbf{Initialization:}
                \STATE Randomly initializing the shared center set $\mathcal{C}_s$, generator $G$ and
                 discriminator $D$.
                 \STATE Randomly initializing target label set $\{\hat{y}_i^t\}$, 
                target sample weight set $\{w_i\}$.
                \STATE \textbf{Training:}
                    \FOR {$m=1 \to M$}
                            \STATE Generate training mini-batch $B_m^s$ and $B_m^t$. \\
                            \IF {$(t \mod k) ==0 $}
                                \STATE  Update $\hat{y}_i^t$ and $w_i$ 
                                for $x_i^t \in \mathcal{D}^t$
                                by $\mathcal{C}_s$ and $G$ \label{totel_weight_al}
                            \ENDIF
                            \STATE Train discriminator $D$ 
                            with mini-batch $B_m^s$ and $B_m^t$
                             by minimizing Eq.~\ref{disc_loss}\\
                            \STATE Train generator $G$ and 
                            the shared center set $\mathcal{C}_s$ with 
                            mini-batch $B_m^s$ and $B_m^t$ by minimizing Eq.~\ref{loss_for_G}.\\
                    \ENDFOR
                \STATE \textbf{Inference:}
                \STATE Predicting ${\hat{y}_i^t}$ by 
                generator $G$ and center set $\mathcal{C}_s$
                \renewcommand{\algorithmicrequire}{\textbf{Output:}}
            \end{algorithmic}
        \end{small}
    \end{algorithm}
    
\section{Theoretical Analysis} \label{subsec_analysis}
Following \cite{Ben-David2010}, we theoretically analyze SDA-TCL. 
The following Lemma shows that the upper bound of the expected error 
    on the target samples $\epsilon_\mathcal{T}(h)$ is decided by three terms:
    \begin{lemma} \label{lemma_1}
    Let $\mathcal{H}$ be the hypothesis space.
    Given the source domain $\mathcal{S}$ and target domain $\mathcal{T}$, we have
    \begin{equation} 
    \forall h \in \mathcal{H}, \epsilon_\mathcal{T}(h) \leq \epsilon_\mathcal{S}(h) 
    + \frac{1}{2}d_{\mathcal{H}\Delta\mathcal{H}}(\mathcal{S}, \mathcal{T}) + C,
    \end{equation}
    where the first term $\epsilon_\mathcal{S}(h)$ denotes the expected error on 
    the source samples,
    the second term 
    $\frac{1}{2}d_{\mathcal{H}\Delta\mathcal{H}}(\mathcal{S}, \mathcal{T})$ 
    is the $\mathcal{H}\Delta\mathcal{H}$-distance
    which denotes the divergence between source and target domain, 
    and the third term $C$ is the 
    excepted error of the ideal joint hypothesis. 
    \end{lemma}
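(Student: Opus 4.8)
The plan is to reproduce the classical target-error bound of Ben-David et al. The core tool is the disagreement function $\epsilon_\mathcal{D}(h,h') = \mathbb{E}_{x\sim\mathcal{D}}[\,|h(x)-h'(x)|\,]$ defined on any domain $\mathcal{D}$, which reduces to the ordinary error $\epsilon_\mathcal{D}(h)$ when $h'$ is taken to be the ground-truth labeling function $f_\mathcal{D}$. I will also need the ideal joint hypothesis $h^\ast = \arg\min_{h\in\mathcal{H}}\big(\epsilon_\mathcal{S}(h)+\epsilon_\mathcal{T}(h)\big)$, and I will define the constant $C := \epsilon_\mathcal{S}(h^\ast)+\epsilon_\mathcal{T}(h^\ast)$ to be its combined source-plus-target error.

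First I would verify that $\epsilon_\mathcal{D}(\cdot,\cdot)$ satisfies the triangle inequality: for each fixed $x$ the quantity $|h(x)-h'(x)|$ is a metric on label values, and taking expectations preserves the inequality. Second, inserting $h^\ast$ as an intermediate hypothesis and applying this triangle inequality in the target domain gives $\epsilon_\mathcal{T}(h) \le \epsilon_\mathcal{T}(h^\ast) + \epsilon_\mathcal{T}(h,h^\ast)$. Third, I would transfer the cross-domain disagreement term $\epsilon_\mathcal{T}(h,h^\ast)$ to the source domain by writing $\epsilon_\mathcal{T}(h,h^\ast) \le \epsilon_\mathcal{S}(h,h^\ast) + |\epsilon_\mathcal{T}(h,h^\ast)-\epsilon_\mathcal{S}(h,h^\ast)|$ and bounding the absolute difference by $\tfrac{1}{2}d_{\mathcal{H}\Delta\mathcal{H}}(\mathcal{S},\mathcal{T})$, since by definition the $\mathcal{H}\Delta\mathcal{H}$-distance is twice the supremum of exactly this type of difference over pairs of hypotheses in $\mathcal{H}$. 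Finally, one more application of the triangle inequality in the source domain yields $\epsilon_\mathcal{S}(h,h^\ast) \le \epsilon_\mathcal{S}(h) + \epsilon_\mathcal{S}(h^\ast)$.

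Chaining these four steps together and substituting $C = \epsilon_\mathcal{S}(h^\ast) + \epsilon_\mathcal{T}(h^\ast)$ produces
\[
\epsilon_\mathcal{T}(h) \le \epsilon_\mathcal{S}(h) + \tfrac{1}{2}d_{\mathcal{H}\Delta\mathcal{H}}(\mathcal{S},\mathcal{T}) + C,
\]
which is exactly the claimed inequality.

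The step I expect to be the main obstacle is the third one, namely justifying that the cross-domain gap $|\epsilon_\mathcal{T}(h,h^\ast)-\epsilon_\mathcal{S}(h,h^\ast)|$ is controlled by $\tfrac{1}{2}d_{\mathcal{H}\Delta\mathcal{H}}(\mathcal{S},\mathcal{T})$. The delicate point is that $h$ and $h^\ast$ both lie in $\mathcal{H}$, so the region on which they disagree is a member of the induced symmetric-difference class $\mathcal{H}\Delta\mathcal{H}$; the disagreement $\epsilon_\mathcal{D}(h,h^\ast)$ is precisely the probability mass that $\mathcal{D}$ assigns to that region, and hence the source-target difference of these masses is dominated by the supremum defining $d_{\mathcal{H}\Delta\mathcal{H}}$. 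Keeping track of the factor of $\tfrac{1}{2}$ and ensuring the ideal hypothesis is drawn from the same $\mathcal{H}$ are the two details that will require the most care.
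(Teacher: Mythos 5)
Your proof is correct: the paper states Lemma~\ref{lemma_1} without proof, importing it directly from Ben-David et al.\ (the cited reference), and your reconstruction is exactly the standard argument from that source --- insert the ideal joint hypothesis $h^\ast$ via the triangle inequality, transfer the disagreement term across domains using the fact that $\{x : h(x) \neq h^\ast(x)\}$ lies in the symmetric-difference class so the gap is bounded by $\tfrac{1}{2}d_{\mathcal{H}\Delta\mathcal{H}}(\mathcal{S},\mathcal{T})$, and absorb $\epsilon_\mathcal{S}(h^\ast)+\epsilon_\mathcal{T}(h^\ast)$ into $C$. Your definition of $C$ also matches the one the paper uses later in its proof of Theorem~\ref{theorem_1}, so nothing is missing.
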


    In our method, the first term can be minimized easily with the source labels. 
    Furthermore, the second term 
    is also expected to be small by optimizing the domain-invariant features between 
    $\mathcal{S}$ and $\mathcal{T}$. 
    The third term is treated as a negligibly small 
    term and is
    usually disregarded by previous methods~\cite{Ganin2015, Long2015, Pinheiro2018}.
    However, a large $C$ may hurt the performance on the target domain~\cite{Ben-David2010}.
    We will show that our method optimizes the upper bound for $C$.
    \begin{theorem} \label{theorem_1}
        Let $f_\mathcal{S}$ and $f_\mathcal{T}$ are the labeling functions for domain 
        $\mathcal{S}$ and domain $\mathcal{T}$ respectively. $f_{\hat{T}}$ denotes
        the pseudo target labeling function in our method, we have
        \begin{equation}\label{optimize_C}
            C \leq \min_{h \in \mathcal{H}}
            \epsilon_{\mathcal{S}}(h, f_\mathcal{\hat{T}})
            + \epsilon_{\mathcal{T}}(h, f_\mathcal{\hat{T}}) 
            + \epsilon_{\mathcal{S}}(f_\mathcal{S}, f_\mathcal{\hat{T}})
            + \epsilon_{\mathcal{T}}(f_\mathcal{\hat{T}}, f_\mathcal{T}). 
        \end{equation}
    \end{theorem}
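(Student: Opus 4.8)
The plan is to unpack the definition of the constant $C$ and then reduce the statement to a single application of the triangle inequality for the disagreement error, routed through the pseudo-labeling function $f_{\hat{\mathcal{T}}}$. Following \cite{Ben-David2010}, $C$ is the error of the ideal joint hypothesis, i.e. $C = \min_{h \in \mathcal{H}} \big( \epsilon_{\mathcal{S}}(h, f_{\mathcal{S}}) + \epsilon_{\mathcal{T}}(h, f_{\mathcal{T}}) \big)$, where $\epsilon_{\mathcal{D}}(f, g)$ denotes the expected disagreement between $f$ and $g$ under distribution $\mathcal{D}$. The key structural fact I would invoke is that for any fixed distribution $\mathcal{D}$ the map $(f, g) \mapsto \epsilon_{\mathcal{D}}(f, g)$ is symmetric and obeys the triangle inequality $\epsilon_{\mathcal{D}}(f, g) \leq \epsilon_{\mathcal{D}}(f, h) + \epsilon_{\mathcal{D}}(h, g)$; this holds because the disagreement is the expectation of $|f - g|$, which is a genuine pseudometric on labelings.

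First I would bound the source term of $C$ by inserting $f_{\hat{\mathcal{T}}}$ as an intermediate labeling: for every $h$, $\epsilon_{\mathcal{S}}(h, f_{\mathcal{S}}) \leq \epsilon_{\mathcal{S}}(h, f_{\hat{\mathcal{T}}}) + \epsilon_{\mathcal{S}}(f_{\hat{\mathcal{T}}}, f_{\mathcal{S}})$, and by symmetry $\epsilon_{\mathcal{S}}(f_{\hat{\mathcal{T}}}, f_{\mathcal{S}}) = \epsilon_{\mathcal{S}}(f_{\mathcal{S}}, f_{\hat{\mathcal{T}}})$. In exactly the same way I would bound the target term, $\epsilon_{\mathcal{T}}(h, f_{\mathcal{T}}) \leq \epsilon_{\mathcal{T}}(h, f_{\hat{\mathcal{T}}}) + \epsilon_{\mathcal{T}}(f_{\hat{\mathcal{T}}}, f_{\mathcal{T}})$. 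Summing the two inequalities gives, for every $h \in \mathcal{H}$,
\[
\epsilon_{\mathcal{S}}(h, f_{\mathcal{S}}) + \epsilon_{\mathcal{T}}(h, f_{\mathcal{T}}) \leq \epsilon_{\mathcal{S}}(h, f_{\hat{\mathcal{T}}}) + \epsilon_{\mathcal{T}}(h, f_{\hat{\mathcal{T}}}) + \epsilon_{\mathcal{S}}(f_{\mathcal{S}}, f_{\hat{\mathcal{T}}}) + \epsilon_{\mathcal{T}}(f_{\hat{\mathcal{T}}}, f_{\mathcal{T}}).
\]

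Finally I would take the minimum over $h \in \mathcal{H}$ on both sides. The left-hand minimum is exactly $C$; on the right-hand side the last two terms do not depend on $h$, so they survive the minimization unchanged and the $\min$ acts only on the first two terms, yielding the claimed bound. Concretely, evaluating the per-$h$ inequality at the minimizer $h^{\star}$ of $\epsilon_{\mathcal{S}}(h, f_{\hat{\mathcal{T}}}) + \epsilon_{\mathcal{T}}(h, f_{\hat{\mathcal{T}}})$ already gives $C \leq \text{RHS}$, so no further argument is needed.

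I do not anticipate a hard technical obstacle; the entire content is the triangle inequality plus bookkeeping. The one point that warrants care is justifying the triangle inequality and symmetry of $\epsilon_{\mathcal{D}}$ in the precise form of loss used (binary $0$--$1$ disagreement versus a multiclass surrogate), since the bound would be vacuous if these metric properties failed; for the standard $0$--$1$ disagreement assumed in \cite{Ben-David2010} they hold verbatim. A secondary subtlety is the reading of the $\min$ in the statement, namely that it binds only the $h$-dependent terms, and I would make this explicit to avoid ambiguity.
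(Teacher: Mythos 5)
Your proposal is correct and follows essentially the same route as the paper: both proofs expand the definition $C = \min_{h}\bigl(\epsilon_{\mathcal{S}}(h, f_{\mathcal{S}}) + \epsilon_{\mathcal{T}}(h, f_{\mathcal{T}})\bigr)$ and apply the triangle inequality for classification error (citing the same pseudometric property from Ben-David et al.\ and Crammer et al.) once to the source term and once to the target term, with $f_{\hat{\mathcal{T}}}$ as the intermediate labeling, before minimizing over $h$. Your explicit handling of the symmetry of the disagreement and of how the $\min$ binds only the $h$-dependent terms is slightly more careful than the paper's two-line chain of inequalities, but it is the same argument.
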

    
    \begin{proof}
    The excepted error of the ideal joint hypothesis $C$ is defined 
    as:
    \begin{equation}
        C = \mathop{\min}\limits_{h \in \mathcal{H}}\epsilon_{\mathcal{S}}(h, f_\mathcal{S}) + \epsilon_{\mathcal{T}}(h, f_\mathcal{T}). 
    \end{equation}
    Following the triangle inequality for classification
     error \cite{Ben-David2006, Crammer2008}, that is, for
      any labeling functions $f_1$, $f_2$ and $f_3$, we
       have $\epsilon(f_1, f_2) \leq \epsilon(f_1, f_3) + \epsilon(f_2, f_3)$,
        we could have 
    \begin{align}
    C &= \min_{h \in \mathcal{H}}\epsilon_{\mathcal{S}}(h, f_\mathcal{S}) 
    + \epsilon_{\mathcal{T}}(h, f_\mathcal{T}) \\ \notag
    & \leq \min_{h \in \mathcal{H}}\epsilon_{\mathcal{S}}(h, f_\mathcal{S})
    + \epsilon_{\mathcal{T}}(h, f_\mathcal{\hat{T}}) 
    + \epsilon_{\mathcal{T}}(f_\mathcal{\hat{T}}, f_\mathcal{T})  \\  \notag
    & \leq \min_{h \in \mathcal{H}}\epsilon_{\mathcal{S}}(h, f_\mathcal{\hat{T}})
    +\epsilon_{\mathcal{T}}(h, f_\mathcal{\hat{T}}) 
    + \epsilon_{\mathcal{S}}(f_\mathcal{S}, f_\mathcal{\hat{T}})
    + \epsilon_{\mathcal{T}}(f_\mathcal{\hat{T}}, f_\mathcal{T}).
    \end{align}
    $\epsilon_{\mathcal{S}}(h, f_\mathcal{\hat{T}}) 
    + \epsilon_{\mathcal{T}}(h, f_\mathcal{\hat{T}})$ denotes the disagreement 
    between $h$ and the pseudo target labeling function $f_\mathcal{\hat{T}}$ and
    is minimized by target classifier 
    learning loss $L_{t}(\theta_G)$ in Eq.~\ref{loss_for_G}.
    $\epsilon_{\mathcal{S}}(f_\mathcal{S}, f_\mathcal{\hat{T}})$ 
    denotes the disagreement between the source
    labeling function $f_\mathcal{S}$ and the pseudo 
    target labeling function $f_\mathcal{\hat{T}}$
    on source samples and is minimized by semantic domain alignment loss $L_{c}(\theta_G)$ 
    in Eq.~\ref{loss_for_G}. 
    Specifically,
    we align class-level distributions by sharing class centers
    between two domains, so a source sample with class $k$ should be predicted 
    as class $k$ by the pseudo target labeling function $f_\mathcal{\hat{T}}$.
    Consequently,  $\epsilon_{\mathcal{S}}(f_\mathcal{S}, f_\mathcal{\hat{T}})$ is 
    expected to be small.
    $\epsilon_{\mathcal{T}}(f_\mathcal{\hat{T}}, f_\mathcal{T})$ 
    denotes the 
    false pseudo-labels ratio, which is assumed to be decreased 
    as the training moves on~\cite{Saito2017,Xie2018}.
    Thus, our method SDA-TCL aims to minimize all the four terms 
    in Theorem~\ref{lemma_1}, while the existing methods neglected 
    the target classifier learning term or the semantic domain alignment
    term~\cite{Saito2017,Zhang2018,Xie2018,Zhang2018a}.
    \end{proof}

\section{Experiments}\label{experimental_result}
    We evaluate the performance of our method on three benchmark unsupervised domain 
    adaptation datasets across different domain shifts
    by classification accuracy metric.

    \subsection{Datasets and Baselines} \label{exp_setup}

    \textbf{Office-31 Dataset}~\cite{Saenko2010}.
    Office-31 dataset contains 4110 images of 31 different categories, 
    which are everyday office objects.
    The images belong to three imbalanced distinct domains: 
    (\romannumeral1) Amazon website (A domain, 2817 images),
    (\romannumeral2) Web camera (W domain, 795 images),
    (\romannumeral3) Digital SLR camera (D domain, 498 images).
    We conduct experiments on the above six transfer tasks.

    \textbf{Digits Datasets} \cite{Hull1994, LeCun1998}. 
    The Digits datasets include USPS~\cite{Hull1994} (U domain) and 
    MNIST~\cite{LeCun1998} (M domain).
    For the tasks U$\rightarrow$M and M$\rightarrow$U, 
    we conduct the experiments on two experimental settings: 1) using all the training 
    data in MNIST and USPS during training~\cite{Bousmalis2017,Pinheiro2018}; 
    2) sampling 2,000 training samples from MNIST and 1,800 training samples from USPS 
    for training~\cite{Tzeng2017}.

    \textbf{VisDA Dataset}~\cite{Peng2017}.
    VisDA dataset evaluates adaptation from synthetic-object to real-object images
     (Synthetic$\rightarrow$Real).
    To date, this dataset represents the largest dataset for cross-domain 
    object classification,
    with 12 categories for three domains.
    In the experiments, we regard the training domain as the source domain and 
    the validation domain as the target domain following the 
    setting in~\cite{Pinheiro2018,Long2018}.

    \textbf{Baseline Methods}.
    We compared our proposed SDA-TCL with state-of-the-art methods: 
    (\uppercase\expandafter{\romannumeral1}) Domain-level distribution alignment methods:
    Gradient Reversal (RevGrad)~\cite{Ganin2015}, 
    Similarity Learning (SimNet)~\cite{Pinheiro2018};
    (\uppercase\expandafter{\romannumeral2}) Class-level distribution alignment methods:
    Transferable Prototypical Networks (TPN)~\cite{Pan2019},
    Domain-Invariant Adversarial Learning (DIAL)~\cite{Zhang2018a},
    Similarity Constrained Alignment (SCA)~\cite{Deng2018};
    (\uppercase\expandafter{\romannumeral3}) Aligning distributions on pixel-level methods:
    Coupled Generative Adversarial Network (CoGAN)~\cite{Liu2016},
    Cycle-Consistent Adversarial Domain Adaptation (CyCADA)~\cite{Hoffman2018},
    Generate To Adapt (GTA)~\cite{Sankaranarayanan2018};
    (\uppercase\expandafter{\romannumeral4}) Utilizing pseudo-labels implicitly methods:
    Maximum Classifier Discrepancy (MCD)~\cite{Saito2018};
    (\uppercase\expandafter{\romannumeral5})Learning target classifier methods:
    Incremental Collaborative and Adversarial Network (iCAN)~\cite{Zhang2018};
    (\uppercase\expandafter{\romannumeral5})State-of-the-art Methods:
    Joint Adaptation Network (JAN)~\cite{Long2017},
    Deep adversarial Attention Alignment (DAAA)~\cite{Kang2018}, 
    Self-Ensembling (S-En)~\cite{French2018},
    Conditional Domain Adversarial Networks (CDAN+E)~\cite{Long2018}.
    With the same protocol, 
    we cite the results from the papers respectively following 
    the previous methods~\cite{Saito2017,Xie2018}. 
    For a better comparison, 
    we report our implementation of the
    RevGrad~\cite{Ganin2015} method, which is denoted as RevGrad-ours.
    We also compare our methods with the Source-only setting, where 
    we train the model by only utilizing the source data.

    \subsection{Implementation Detail}\label{Implementation_Detail_sec}
    \textbf{Network Architectures}.
    For Digits datasets, we construct the generator network $G$ by utilizing
    three convolution layers 
    and a fully-connected layer as the embedding layer following~\cite{Saito2018}.
    For the Office-31 and VisDA dataset, we utilize the 
    ResNet-50~\cite{He2016} network pre-trained on ImageNet~\cite{Russakovsky2015}
    with an embedding layer to 
    represent the generator $G$.
    The discriminator $D$ is a fully-connected network 
    with two hidden layers of 1024 units followed by the domain classifier.

    \textbf{Parameters}.
    We use Adam~\cite{Kingma2014} to optimize class centers, the generator $G$ and 
    discriminator $D$.
    The learning rate are set as $1.0 \times 10^{-4}$ for the networks and 
    $1.0 \times 10^{-2}$ for the class centers respectively.
    We divide the learning rate by 10 when optimizing the pre-trained layers.
    We set the batch size to 32 for each domain and the embedding size to 512.
    For the margin parameters, following~\cite{Schroff15,Manmatha17}, we use the
     recommended value by setting $\alpha = 0.2$ and $\beta = 1.2$.
    For the balance parameters, we set 
    $\lambda_d = \frac{2}{1+\exp(-10 \cdot p)} - 1$ following~\cite{Ganin2016} 
    to suppress noisy signal from the discriminator at the early iterations of training,
    where $p$ is training progress changing from 0 to 1.
    We set $\lambda_t = K \times \lambda_d$ to focus more on the target pseudo-labels 
    as the training process goes on.
    We choose the parameter 
    $K =5$ and the time $I_s = 200$ for involving pseudo-labels via 
    reverse cross-validation~\cite{Ganin2016} on the task D $\rightarrow$ A and 
    fix them for the experiments.
    We run all experiments with 
    PyTorch on a Tesla V100 GPU.
    We repeat each experiment 5 times and report mean accuracy and standard deviation.

    \subsection{Results}
    The results of SDA-TCL on the Office-31, Digits and 
    VisDA datasets are shown in Table~\ref{office-31_exp}
    and Table~\ref{digit_visda_exp}.
    Compared with the Source-only setting, 
    SDA-TCL improves the performance by utilizing the unlabeled target data in 
    all transfer tasks. 
    It improves the average absolute accuracy by 23.0\% in digits experiments, 
    12.2\% in Office-31 experiments and 22.3\% in VisDA experiments.
    
    Compared with the semantic domain alignment methods (TPN~\cite{Pan2019}, 
    DIAL~\cite{Zhang2018a}, SCA~\cite{Deng2018}) and 
    the target classifier learning methods 
    (iCAN~\cite{Zhang2018}),
    our method outperforms them in most transfer tasks by jointly optimizing 
    semantic domain alignment and 
    target classifier learning in the feature space.
    Compared with state-of-the-art methods, 
    SDA-TCL achieves better or comparable performance in all transfer tasks.
    Please note that S+En\cite{French2018} averaged predictions of 16 differently 
    augmentations versions
    of each image to achieve the accuracy 82.8\% on VisDA dataset while SDA-TCL achieves
    the accuracy 81.9\% by making only one prediction
    for each image following the most methods.
    It is desirable that SDA-TCL outperforms other methods by a large margin in hard tasks, 
    e.g., W$\rightarrow$A and D$\rightarrow$A. 
    Note that we do not tune parameters for every dataset and 
    our results can be improved 
    further by choosing parameters carefully, which is shown in 
    Section~\ref{exp_analysis_sec}.

    \begin{table}[t]
        \caption{Accuracy (\%) for the Office-31 dataset.}
        \label{office-31_exp}
        \begin{center}
        \begin{small}
        \begin{tabular}{lccccccc}
        \toprule
        Method &A$\rightarrow$W&D$\rightarrow$W & W$\rightarrow$D & A$\rightarrow$D &
        D$\rightarrow$A&W$\rightarrow$A &Average\\
        
        \midrule
        RevGrad~\cite{Ganin2015}                & 82.0$\pm$0.4 & 96.9$\pm$0.2 & 99.1$\pm$0.1 
                                                & 79.7$\pm$0.4 & 68.2$\pm$0.4 & 67.4$\pm$0.5 & 82.2 \\
        JAN~\cite{Long2017}                     & 85.4$\pm$0.3 & 97.4$\pm$0.2 & 99.8$\pm$0.2 
                                                & 84.7$\pm$0.3 & 68.6$\pm$0.3 & 70.0$\pm$0.4 & 84.3 \\
        GTA~\cite{Sankaranarayanan2018}         & 89.5$\pm$0.5 & 97.9$\pm$0.3 & 99.8$\pm$0.4 
                                                & 87.7$\pm$0.5 & 72.8$\pm$0.3 & 71.4$\pm$0.4 & 86.5 \\
        DAAA~\cite{Kang2018}                    & 86.8$\pm$0.2 & \textbf{99.3}$\pm$0.1 & \textbf{100.0}$\pm$0.0 
                                                & 88.8$\pm$0.4 & 74.3$\pm$0.2 & 73.9$\pm$0.2 & 87.2 \\                                       
        DIAL~\cite{Zhang2018a}                  & 91.7$\pm$0.4 & 97.1$\pm$0.3 & 99.8$\pm$0.0 
                                                & 89.3$\pm$0.4 & 71.7$\pm$0.7 & 71.4$\pm$0.2 & 86.8 \\
        iCAN~\cite{Zhang2018}                   & 92.5   & 98.8 & \textbf{100.0} 
                                                & 90.1   & 72.1 & 69.9    & 87.2  \\ 
        SCA~\cite{Deng2018}                     & 93.5   & 97.5 & \textbf{100.0} 
                                                & 89.5   & 72.4 & 72.7    & 87.6  \\
        CDAN+E~\cite{Long2018}                  & \textbf{94.1}$\pm$0.1   & 98.6$\pm$0.1 & \textbf{100.0}$\pm$0.0 
                                                & 92.9$\pm$0.2   & 71.0$\pm$0.3 & 69.3$\pm$0.3    & 87.7  \\
        \midrule
        Source-only       & 72.3$\pm$0.8 & 96.5$\pm$0.7 & 99.1$\pm$0.5 & 80.7$\pm$0.5 
                            & 59.7$\pm$1.2 & 59.7$\pm$1.5 & 78.0 \\ 
        RevGrad-ours    & 83.5$\pm$0.5 & 96.8$\pm$0.2 & 99.2$\pm$0.5 
                            & 83.2$\pm$0.4 & 67.6$\pm$0.4 & 65.8$\pm$0.6 & 82.7 \\
        SDA-TCL & 92.4$\pm$0.7 & 99.1$\pm$0.1  & \textbf{100.0}$\pm$0.0 
                & \textbf{93.2}$\pm$1.2  & \textbf{79.0}$\pm$0.3 
                & \textbf{77.6}$\pm$1.0 & \textbf{90.2}\\
        \bottomrule
        \end{tabular}
        \end{small}
        \end{center}
        
    \end{table}

    \begin{table}[t]
        \caption{Accuracy (\%) for the Digit datasets and VisDA dataset.
        $*$ means the setting that utilizes all the training data.
        $\dag$ indicates that this method uses multiple data augmentations.}
        \label{digit_visda_exp}
        \begin{center}
        \begin{small}
        \begin{tabular}{lcccc|lc}
        \toprule
        Method & U$\rightarrow$M & M$\rightarrow$U & $\text{U}^*\rightarrow\text{M}^*$ 
        & $\text{M}^*\rightarrow\text{U}^*$ &
         Method & Synthetic$\rightarrow$Real  \\
        \midrule
        CoGAN~\cite{Liu2016}              & 89.1$\pm$0.8   & 91.2$\pm$0.8 & 93.2          & 95.7             
        &JAN~\cite{Long2017}                           & 61.6    \\
        CyCADA~\cite{Hoffman2018}         &  -   & - & 96.5$\pm$0.1          &  95.6$\pm$0.2
        &GTA~\cite{Sankaranarayanan2018}               & 69.5    \\
        DIAL~\cite{Zhang2018a}            & 97.3$\pm$0.3   & 95.0$\pm$0.2 &  \textbf{99.1}$\pm$0.1&97.1$\pm$0.2  
        &SimNet~\cite{Pinheiro2018}                    & 69.6    \\
        MCD~\cite{Saito2018}        & 94.1$\pm$0.3   & 94.2$\pm$0.7 & -& 96.5$\pm$0.3
        &MCD~\cite{Saito2018}                    & 71.9    \\
        CDAN+E~\cite{Long2018}               & -              & - & 98.0   & 95.6
        &CDAN+E~\cite{Long2018}                          & 70.0    \\
        TPN~\cite{Pan2019}          & 94.1 & 92.1& - & -
        &TPN~\cite{Pan2019}           & 80.4    \\ 
        \midrule
        Source-only                       & 71.9$\pm$2.3 & 78.1$\pm$3.5 & 70.5$\pm$1.9 & 80.3$\pm$1.7
        &Source-only                                   & 59.6$\pm$0.2    \\ 
        SDA-TCL                      & \textbf{97.6}$\pm$0.2
                                    & \textbf{97.6}$\pm$0.4  
                             & 99.0$\pm$0.1 & \textbf{98.9}$\pm$0.1 
        &SDA-TCL                              & \textbf{81.9}$\pm$0.3 \\
         \midrule
         S-En~\cite{French2018}
         & - & - & 98.1$\pm$2.8 & 98.3$\pm$0.1 
        &S-En~\cite{French2018}& 74.2\\
         $\text{S-En}^\dag$~\cite{French2018}
         & - & - & 99.5$\pm$0.0 & 98.2$\pm$0.1 
        &$\text{S-En}^\dag$~\cite{French2018}& 82.8\\
        
        \bottomrule
        \end{tabular}
        \end{small}
        \end{center}

    \end{table}

    \subsection{Ablation Study}\label{Ablation_study}
    Our method is not a straightforward
    combination of semantic domain alignment methods and 
    target classifier learning methods.
    Existing methods~\cite{Saito2017,Zhang2018} utilize two different losses
     to learn the target 
    discriminative features (softmax loss) and semantic 
    domain-invariant features (center alignment loss \cite{Zhang2018a, Xie2018, Chen2018a}). Instead, 
    We design the discriminative center loss and share the class centers to carry
    out the joint optimization mechanism in the same space.
    Here, We implement the origin SDA and TCL with softmax loss, denoted as 
    SDA-origin and TCL-origin respectively.
    We also implement a linear combination of these two origin methods, denoted as 
    Linear-Combination. 
    For a better comparison, We further conduct experiments on our method 
    without semantic domain alignment (TCL-ours) and 
    without target classifier learning (SDA-ours), respectively.
    The results are shown in Table~\ref{target_loss_diff}.
    
    There are several interesting observations:
    (1) SDA-ours and TCL-ours often show different superiority on different tasks, 
    which means they benefit from the target pseudo-labels
    from different aspects. As a result, 
     the joint optimization SDA-TCL shows better results than only optimizing one of them.
    (2)  When comparing TCL-ours and TCL-origin, 
    TCL-ours outperforms TCL-origin in the transfer tasks, which may benefit from 
    the features optimized by discriminative center loss having intra-class compactness.
    When comparing SDA-ours and SDA-origin, SDA-ours shows better results than 
    SDA-origin, which may be owed that the features in SDA-ours are optimized 
    in the same space. 
    These observations, which are consistent with 
    the analysis in Section~\ref{source_section}, show the effectiveness 
    of discriminative center loss.
    (3) The Linear-Combination does not show any 
    advantages while our method 
    SDA-TCL can highlight it.
    Because Linear-Combination optimizes 
    the features in separate space and it is 
    more sensitive to the weight balance parameters 
    compared with our holistic method SDA-TCL in the 
    experiments.

     \begin{table}[t]
        \caption{Accuracy (\%) for the Office-31 dataset and 
        VisDA dataset under different settings.}
        \label{target_loss_diff}
        \begin{center}
        \begin{small}
        \begin{tabular}{lccccc|c}
        \toprule
        Method &A$\rightarrow$W & A$\rightarrow$D &
        D$\rightarrow$A&W$\rightarrow$A &Average  & Synthetic$\rightarrow$Real\\
        \midrule
        TCL-origin         & 89.6$\pm$0.8 
                     & 86.9$\pm$0.7 & 72.3$\pm$0.4 & 68.7$\pm$0.6 & 79.4&70.8$\pm$0.5 \\
        SDA-origin         & 89.2$\pm$0.7 
                     & 88.3$\pm$1.0 & 72.9$\pm$0.5 & 70.5$\pm$0.6 & 80.2 &68.4$\pm$0.5\\
        Linear-Combination            & 89.4$\pm$0.9 
                     & 87.2$\pm$0.6 & 73.3$\pm$0.5 & 71.5$\pm$0.5 & 80.4  &70.4$\pm$0.6\\
         \midrule
        TCL-ours          & 90.0$\pm$1.7 
                     & 92.2$\pm$2.3 & 77.7$\pm$0.6 & 77.1$\pm$1.8 & 84.3 &81.5$\pm$0.8 \\
        SDA-ours         & \textbf{92.8}$\pm$0.8 
                     & 92.7$\pm$1.2 & 77.6$\pm$0.5 & 76.9$\pm$0.9 & 85.0  &79.8$\pm$0.8\\
       
        SDA-TCL      & 92.4$\pm$0.7 
        & \textbf{93.2}$\pm$1.2 & \textbf{79.0}$\pm$0.3 
        & \textbf{77.6}$\pm$1.0 & \textbf{85.6}  & \textbf{81.9}$\pm$0.3\\

        \bottomrule
        \end{tabular}
        \end{small}
        \end{center}
    \end{table}
    
    \begin{figure}[t]
            \centering
            \subfloat[Parameter $I_s$]{%
            \includegraphics[width=0.24\textwidth,height=2.8cm,trim={0cm 0cm 0cm 0cm},clip]
            {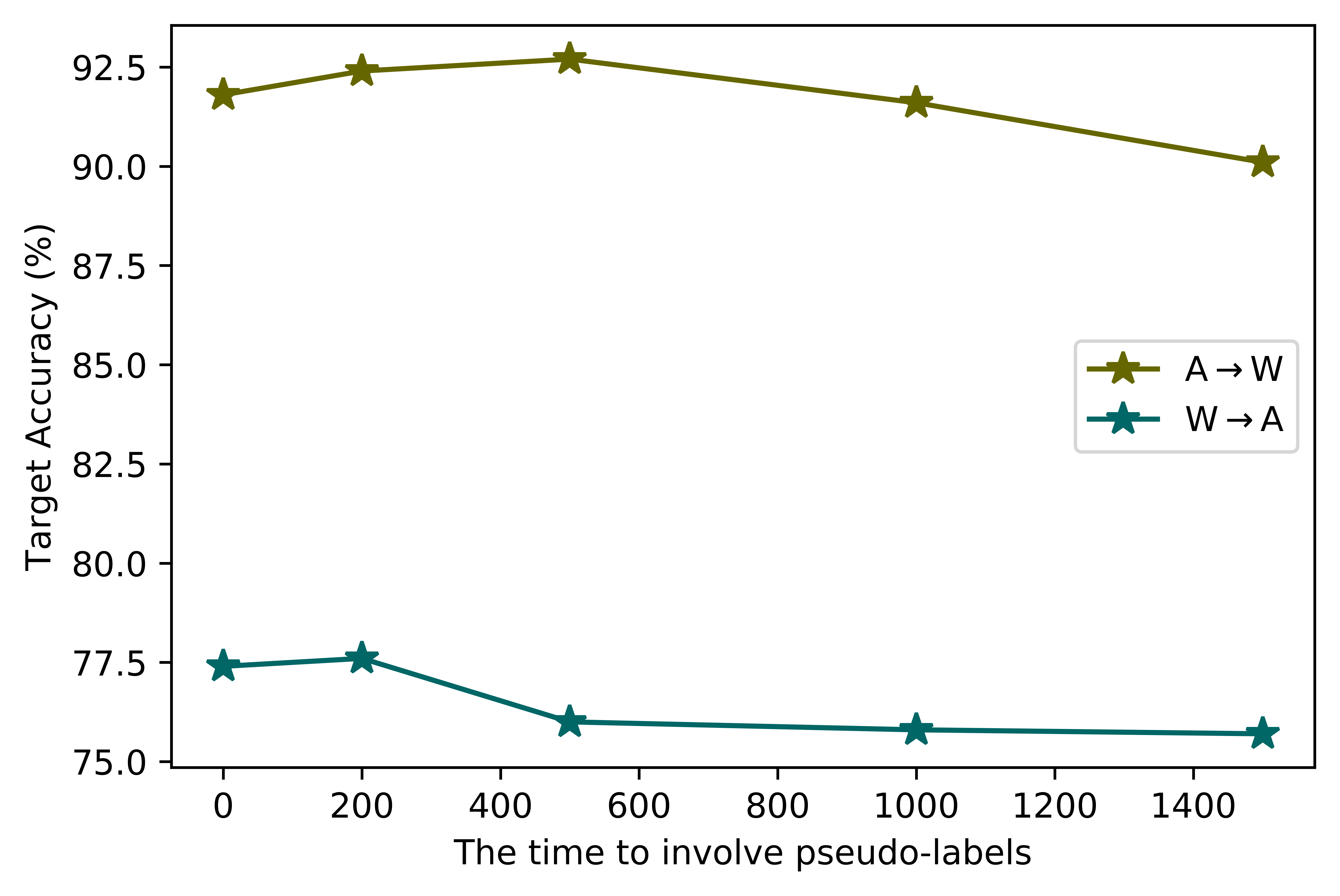}}
            \subfloat[Parameter $K$]{%
            \includegraphics[width=0.24\textwidth,height=2.8cm,trim={0cm 0cm 0cm 0cm},clip]
            {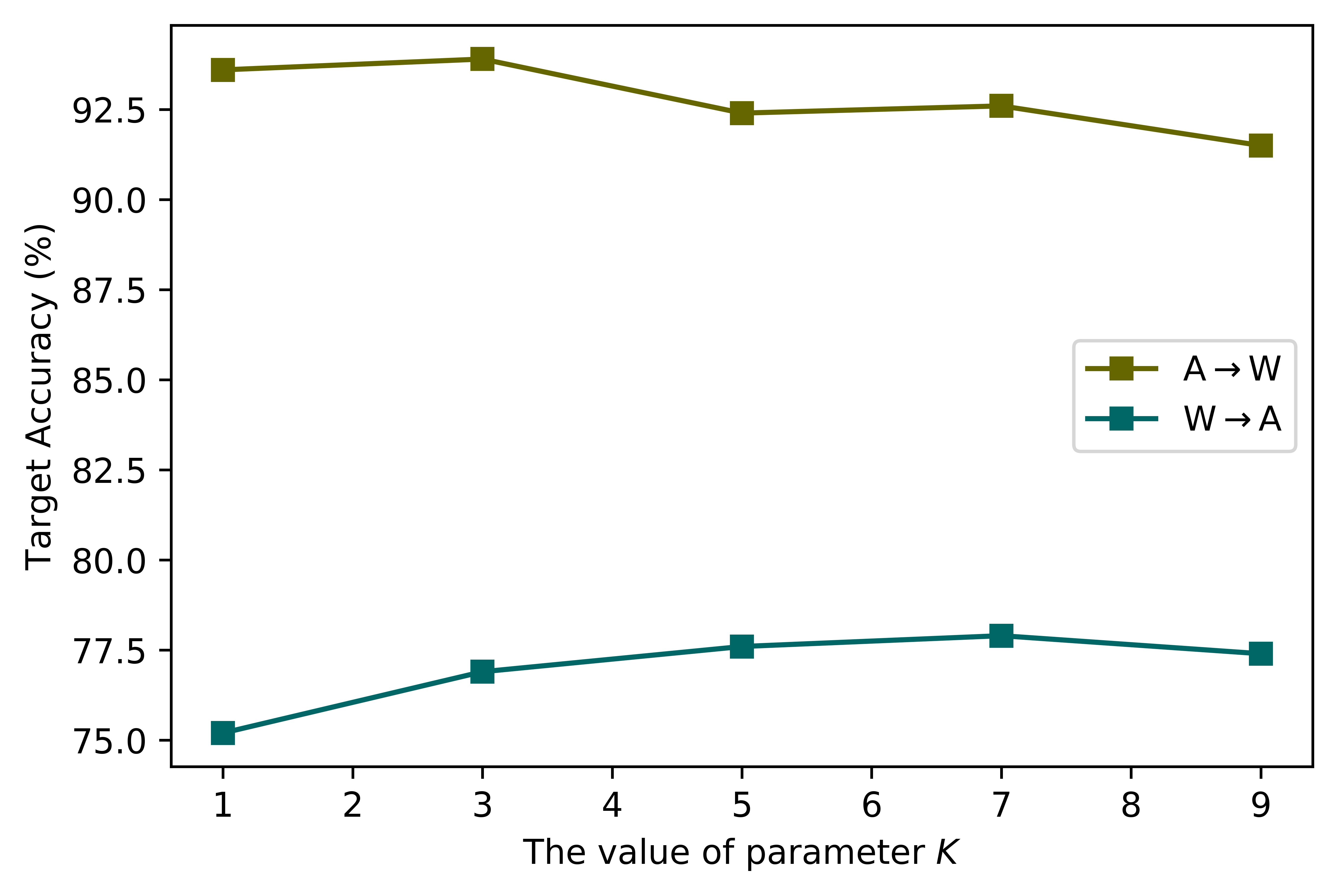}}
            \subfloat[Discrepancy]{%
            \includegraphics[width=0.24\textwidth,height=2.8cm,
            trim={0cm 0cm 0cm 0cm},clip]{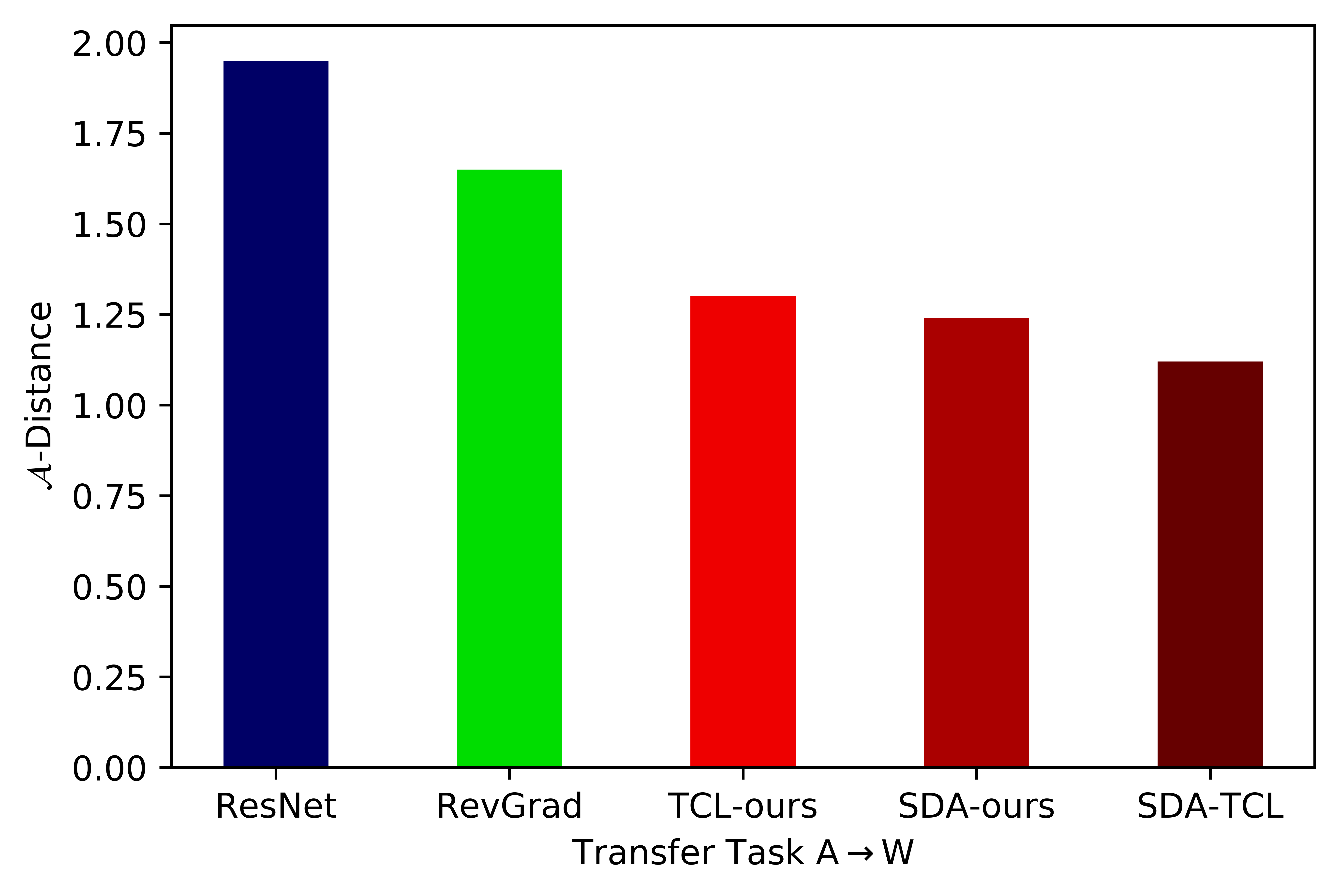}}
            \subfloat[Convergence]{%
            \includegraphics[width=0.24\textwidth,height=2.8cm,
            trim={0cm 0cm 0cm 0cm},clip]{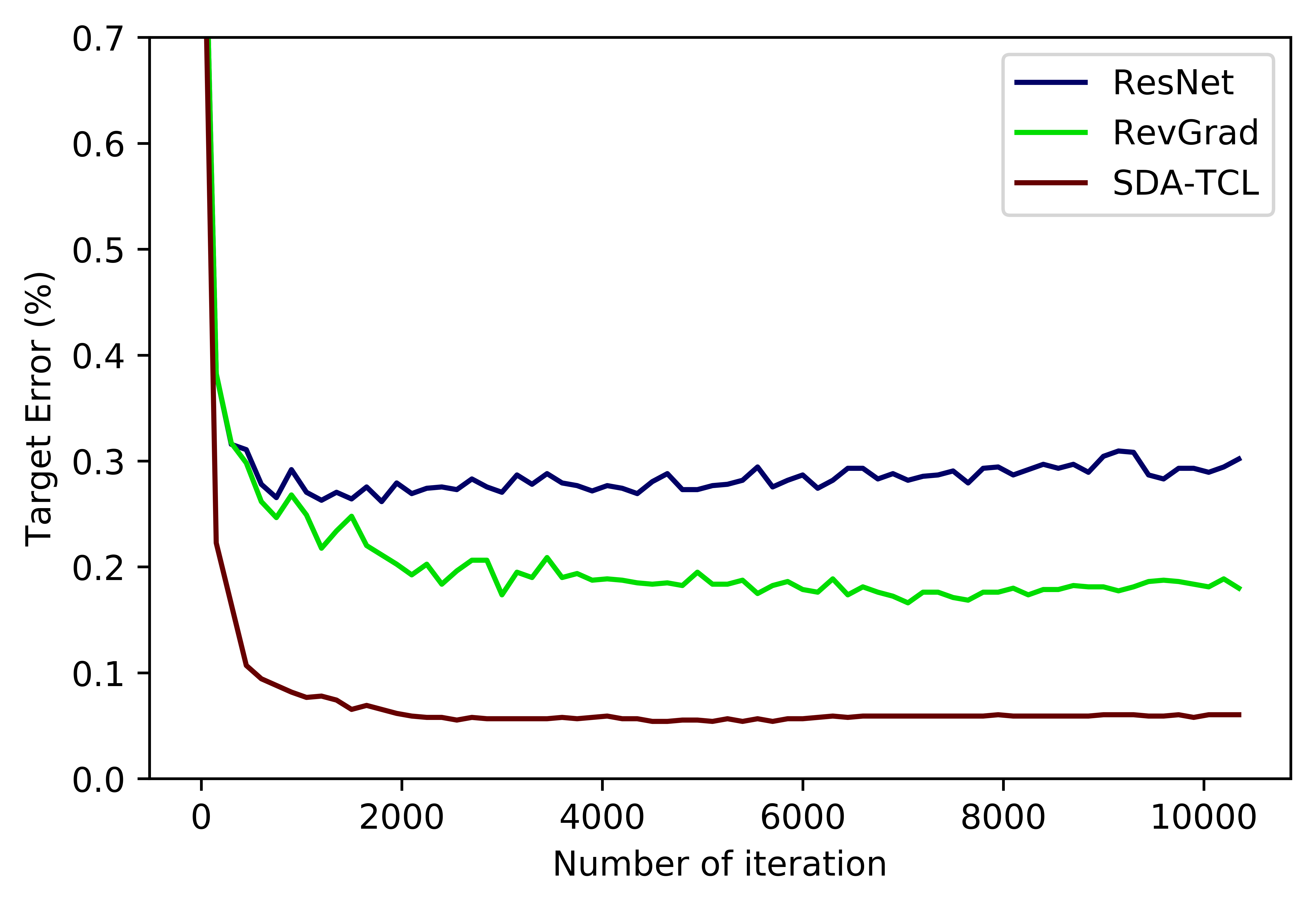}}
            \caption{
                Analysis of 
                parameter $I_s$, 
                parameter $K$, distribution discrepancy, and convergence.
            }
            \label{feat_vis_pic}
    \end{figure}

    \subsection{Empirical Understanding}\label{exp_analysis_sec}

    \textbf{The time to involve pseudo-labels}. 
    We utilize the parameter $I_s$ to control the time to involve 
    the target pseudo-labels in Section~\ref{tcl_sec} and we conduct 
    experiments by choosing $I_s$ from \{0, 200, 500, 1000, 1500\}
    on task A$\rightarrow$W and W$\rightarrow$A.
    The results shown in Figure~\ref{feat_vis_pic}(a)
    indicate that 
    there is a trade-off for the time to involve target pseudo-labels
    and a relative small iteration could be a good choice, which 
    is consistent with the analysis in Section~\ref{tcl_sec}.
    
    \textbf{Parameter Sensitivity}. In our method SDA-TCL, we use the parameter $K$ 
    to decide $\lambda_t$ that controls 
    the importance of  
    utilizing the target pseudo-labels.
    We conduct experiments to evaluate SDA-TCL by
    choosing $K$ in the range of \{1,3,5,7,9\} on task A$\rightarrow$W and W$\rightarrow$A.
    From the results shown in Figure~\ref{feat_vis_pic}(b), we can find 
    that SDA-TCL can achieve good performance with a wide range of $K$.

    \textbf{Distribution Discrepancy}. 
    The $\mathcal{A}$-distance is defined
     as $\text{dist}_{\mathcal{A}} = 2(1-2\epsilon)$
    to measure the distribution discrepancy~\cite{Ben-David2010, Mansour2009}, where 
    $\epsilon$ denotes the test error of a classifier trained to discriminate 
    the source from target. 
    A smaller $\text{dist}_{\mathcal{A}}$ means a smaller domain gap.
    Figure~\ref{feat_vis_pic}(c) shows $\text{dist}_{\mathcal{A}}$ on 
    task A$\rightarrow$W with features of ResNet, RevGrad, SDA-ours, 
    TCL-ours and SDA-TCL. The results indicate that SDA-TCL can reduce the domain 
    gap more effectively. With class-level distribution alignment,
    SDA-ours and SDA-TCL have a smaller $\text{dist}_{\mathcal{A}}$ 
    than RevGrad. TCL-ours also has a smaller $\text{dist}_{\mathcal{A}}$ 
    than RevGrad, which indicates that TCL-ours is helpful for the domain 
    alignment.

    \textbf{Convergence}. 
    We demonstrate the convergence of ResNet, RevGrad, and SDA-TCL, 
    with the error rates in the target domain on task A$\rightarrow$W 
    shown in Figure~\ref{feat_vis_pic}(d).
    SDA-TCL has faster convergence than RevGrad and the convergence process 
    is more stable than RevGrad.

\section{Conclusion}\label{conclusion}
    In this paper, we proposed a novel method for unsupervised domain adaptation
    by jointly optimizing semantic domain alignment and 
    target classifier learning in the \textit{feature} space.
    The joint optimization mechanism can not only 
  eliminate their weaknesses but 
  also complement their strengths.
    Experiments on several benchmarks demonstrate that our method
    surpasses state-of-the-art unsupervised domain adaptation methods.
    Recently, learnware is defined to be facilitated with model reusability \cite{Zhou2016}. 
    The use of a learned model to another task, however, is not trivial. 
    There have been some efforts towards this 
    direction~\cite{Ye2018,Shen2018,Yang2017,Hu2018}, 
    whereas the approach presented in this paper offers another possibility.

\printbibliography

\end{document}